\documentclass[10pt,journal]{IEEEtran}
\usepackage{amsmath}           
\usepackage{amssymb}
\usepackage{url}
\usepackage{cite}
\usepackage[latin1]{inputenc}  
\usepackage{graphicx}          



\usepackage{algorithm}
\usepackage{algpseudocode}
\usepackage{enumerate}
\usepackage{epstopdf}

\usepackage{amsthm}
\newtheorem{thm}{Theorem}
\theoremstyle{definition}
\theoremstyle{remark}
\newtheorem{rem}{Remark}
\theoremstyle{plain}

\DeclareMathOperator{\E}{E}

\DeclareMathOperator{\mvec}{vec}
\DeclareMathOperator{\tr}{tr}
\newcommand{\mbf}[1]{\mathbf{#1}}
\newcommand{\mbs}[1]{\boldsymbol{#1}}
\newcommand{\what}[1]{\widehat{#1}}
\newcommand{\wtilde}[1]{\widetilde{#1}}
\newcommand{\wbar}[1]{\overline{#1}}

\newcommand{\sign}{\text{sign}}

\newcommand{\eye}{\mbf{I}}
\newcommand{\covout}{\mbf{R}}
\newcommand{\out}{\mbf{y}}
\newcommand{\Out}{\mbf{Y}}
\newcommand{\inp}{\mbf{u}}
\newcommand{\data}{\mathcal{D}}
\newcommand{\linp}{\mbs{\Theta}}
\newcommand{\linpv}{\mbs{\theta}}
\newcommand{\condz}{\mbs{\zeta}}
\newcommand{\nonq}{\mbf{Z}}
\newcommand{\nonqv}{\mbf{z}}

\newcommand{\regpk}{\mbs{\varphi}}
\newcommand{\regp}{\mbs{\Phi}}
\newcommand{\regsquare}{\mbs{\Psi}}
\newcommand{\regpv}{\mbf{F}}
\newcommand{\regq}{\mbs{\Gamma}}
\newcommand{\gain}{\mbf{P}}
\newcommand{\regpregq}{\mbf{H}}
\newcommand{\regqk}{\mbs{\gamma}}
\newcommand{\regqv}{\mbf{G}}
\newcommand{\covq}{\mbf{D}}

\newcommand{\covcondz}{\mbf{\Sigma}_z}

\newcommand{\recerror}{\mbs{\xi}}
\newcommand{\error}{\mbs{\varepsilon}}

\newcommand{\covnoise}{\mbs{\Sigma}}
\newcommand{\0}{\mbf{0}}
\newcommand{\1}{\mbf{1}}
\newcommand{\weights}{\mbf{w}}
\newcommand{\matbase}{\mbf{E}}
\newcommand{\transp}{\top}
\newcommand{\recreg}{\mbf{c}}
\newcommand{\gammadiff}{\mbf{T}}

\newcommand{\cost}{V}
\newcommand{\para}{\mbs{\Omega}}

\def\NARMAX{\textsc{Narmax}}
\def\NARX{\textsc{Narx}}

\def\ARX{\textsc{Arx}}

\begin{document}

\title{Recursive nonlinear-system identification using latent variables} 
\author{Per Mattsson, Dave Zachariah, Petre Stoica\thanks{ This work has been partly supported by the Swedish Research Council (VR) under contracts 621-2014-5874 and 2016-06079. Per Mattsson is with University of G\"avle. Dave Zachariah and Petre Stoica are with Uppsala University. }}
\maketitle

\begin{abstract}
In this paper we develop a method for learning nonlinear
system models with multiple outputs and inputs. We begin by modelling the
errors of a nominal predictor of the system using a latent variable
framework. Then using the maximum likelihood principle we derive a
criterion for learning the model. The resulting optimization problem
is tackled using a majorization-minimization
approach. Finally, we develop a convex majorization technique and show that it
enables a recursive identification method. The method learns parsimonious
predictive models and is tested on both synthetic and real nonlinear systems.
\end{abstract}
\section{Introduction}
In this paper we consider the problem of learning a nonlinear
dynamical system model with multiple outputs $\out(t)$ and multiple
inputs $\inp(t)$ (when they exist). Generally this identification problem can be tackled using
different model structures, with the class of linear models being arguably the
most well studied in engineering, statistics and econometrics
\cite{SoderstromStoica1988_system,Ljung1998_sysid,Bishop2006_pattern,Barber2012_bayesian,BoxEtAl2015_timeseries}. 

Linear models are often used even when the system is known to be
nonlinear \cite{Enqvist2005linear,SchoukensEtAl2016_linearnonlinear}. However certain nonlinearities,
such as saturations, cannot always be neglected. In such cases using block-oriented
models is a popular approach to capture static nonlinearities
\cite{GiriBai2010_block}. Recently, such models have been given
semiparametric formulations and identified using machine learning
methods, cf. \cite{Pillonetto2013_wiener,PillonettoEtAl2014_kernel}. To model nonlinear dynamics a common
approach is to use \NARMAX{} models \cite{Sjoberg1995_nonlinear,Billings2013_nonlinear}.

In this paper we are interested in recursive identification methods
\cite{Ljung1983_Theory}. In cases where the model structure is linear
in the parameters, recursive least-squares can be applied. For certain
models with
nonlinear parameters, the extended recursive least-squares has been
used \cite{Chen2004_extended}. Another popular approach is the recursive
prediction error method which has been developed, e.g., for Wiener models, Hammerstein models, and polynomial state-space models \cite{Wigren1993_wiener,Mattsson&Wigren2016_hammerstein,Tayamon2012_convergence}. 

Nonparametric models are often based on weighted sums of the
observed data \cite{Roll2005_nonlinear}. The weights vary for each predicted output and the number
of weights increases with each observed datapoint. The weights are
typically obtained in a batch manner;
in \cite{Bai2007_recursive, BijlEtAl2015_online} they are computed
recursively but must be recomputed for each new prediction of the
output.

For many nonlinear systems, however, linear models work well as an initial
approximation. The strategy in \cite{Paduart2010_nonlin_poly} exploits
this fact by first finding the best linear approximation using a frequency domain
approach. Then, starting from this approximation, a nonlinear
polynomial state-space model is fitted by solving a nonconvex
problem. This two-step method cannot be readily implemented
recursively and it requires input signals with appropriate frequency domain
properties.

In this paper, we start from a nominal model structure. This class can be based on insights about the system, e.g. that linear model structures can approximate a system around an operating point. Given a record of past outputs, $\out(t)$ and inputs $\inp(t)$, that is,
\begin{equation*}
\data_{t} \triangleq \bigl\{  \; \left(\out(1), \inp(1)  \right), \dots,  \left(\out(t) , \inp(t) \right) \; \bigr\},
\end{equation*}
a nominal model yields a predicted output $\out_0(t+1)$ which differs
from the output $\out(t+1)$. The resulting prediction error is denoted
$\error(t+1)$ \cite{Ljung1999_model}. By characterizing the nominal prediction errors in a data-driven manner, we aim to develop a refined predictor model of the system. Thus we integrate classic and data-driven system modeling approaches in a natural way.

The general model class and problem formulation are
introduced in Section~\ref{sec:modelstructure}. Then in
Section~\ref{sec:LAVAframework} we apply the principle of maximum
likelihood to derive a statistically motivated learning criterion. In
Section~\ref{sec:proposed} this nonconvex criterion is minimized using
a majorization-minimization approach that gives rise to
a convex user-parameter free method. We derive a computionally
efficient recursive algorithm for solving the convex problem,
which can be applied to large datasets as well as online learning scenarios. In Section~\ref{sec:numericalexamples}, we evaluate the proposed method using both
synthetic and real data examples.

In a nutshell, the contribution of the paper is a modelling approach and
identification method for nonlinear multiple input-multiple output systems that:
\begin{itemize}
\item explicitly separates modeling based on application-specific
  insights from general data-driven modelling,  
\item circumvents the choice of regularization parameters and
  initialization points,
\item learns parsimonious predictor models,
\item admits a computationally efficient implementation.
\end{itemize}

\emph{Notation:} $\matbase_{i,j}$ denotes the $ij$th standard
basis matrix. $\otimes$ and $\odot$ denote the Kronecker and Hadamard
products, respectively. $\text{vec}(\cdot)$ is the vectorization
operation. $\| \mbf{x} \|_2$, $\| \mbf{x} \|_1$ and $\| \mbf{X}
\|_{\mbf{W}} = \sqrt{ \text{tr}\{ \mbf{X}^\top \mbf{W} \mbf{X} \}}$,
where $\mbf{W} \succ \0$, denote $\ell_2$-, $\ell_1$- and weighted
norms, respectively. The Moore-Penrose pseudoinverse of $\mbf{X}$ is denoted $\mbf{X}^\dagger$.

\begin{rem} An implementation of the proposed method is available
at \url{https://github.com/magni84/lava}.
\end{rem}

\section{Problem formulation}\label{sec:modelstructure}

Given $\data_{t-1}$, the $n_y$-dimensional output of a system can always be written as
\begin{equation}\label{eq:out}
\out(t) = \out_0(t) + \error(t),
\end{equation}
where $\out_0(t)$ is any one-step-ahead predictor based on a nominal model. Here we consider nominal models on the form
\begin{equation}\label{eq:y0}
	\out_0(t) = \linp \regpk(t),
\end{equation}
where the $p \times 1$ vector $\regpk(t)$ is a given function of
$\data_{t-1}$ and $\linp$ denotes the unknown parameters.

\begin{rem}
A typical example of $\regpk(t)$ is
\begin{equation}\label{eq:linearphi}
	\regpk(t) = [\out^\top(t-1) \: \cdots \: \out^\top(t-n_a) \;
        \inp^\top(t-1) \: \cdots \: \inp^\top(t-n_b) \; 1]^\top,
\end{equation}
in which case the nominal predictor is linear in the data and
therefore captures the linear system dynamics. Nonlinearities can be
incorporated if such are known about the system, in which case
$\regpk(t)$ will be nonlinear in the data.
\end{rem}

The popular \ARX{} model structure, for instance, can be cast into
the framework \eqref{eq:out} and \eqref{eq:y0} by assuming that the
nominal prediction error $\error(t)$ is a white noise process
\cite{SoderstromStoica1988_system,Ljung1998_sysid}. For certain
systems, \eqref{eq:y0} may accurately describe the dynamics of the system around
its operation point and consequently the white noise assumption on $\error(t)$ may
be a reasonable approximation. However, this ceases to be the case
even for systems with weak nonlinearities, cf. \cite{Enqvist2005linear}.

Next, we develop a data-driven model of the prediction errors $\error(t)$ in \eqref{eq:out}, conditioned on the past data $\data_{t-1}$. Specifically, we assume the conditional model
\begin{equation}\label{eq:noise_model}
  \error(t) \: | \: \data_{t-1} \; \sim \; \mathcal{N}(\nonq \regqk(t),
\covnoise) ,
\end{equation}
where $\nonq$ is an $n_y \times q$ matrix of unknown latent
variables, $\covnoise$ is an unknown covariance matrix, and the $q \times 1$ vector $\regqk(t)$ is any given function of $\data_{t-1}$. 
This is a fairly general model structure that can capture 
correlated data-dependent nominal prediction errors.

Note that when $\nonq \equiv \0$, the prediction errors are temporally white and the nominal model \eqref{eq:y0} captures all relevant system dynamics. The latent variable is modeled as random here. Before data collection, we assume $\nonq$ to have mean $\0$ as we have no reason to depart from the nominal model assumptions until after observing data. Using a Gaussian distribution, we thus get
\begin{equation}\label{eq:pz}
\mvec(\nonq) \sim \mathcal{N}(\0, \covq),
\end{equation}
where $\covq$ is an unknown covariance matrix.

Our goal here is to identify a refined predictor of the form 
\begin{equation}\label{eq:predictor}
\boxed{\hat{\out}(t) = \underbrace{\what{\linp} \regpk(t)}_{\hat{\out}_0(t)} + \underbrace{\what{\nonq}\regqk(t)}_{\hat{\error}(t)},}
\end{equation}
from a data set $\data_{t-1}$, by maximizing the likelihood
function. The first term is an estimate of the nominal predictor model
while the second term tries to
capture structure in the data that is not taken into account by the nominal model.
Note that when $\what{\nonq}$ is sparse we obtain a
parsimonious predictor model.


\begin{rem}
The model \eqref{eq:out}-\eqref{eq:noise_model} implies that we can write the output in the equivalent form
$$\out(t) = \linp \regpk(t) + \nonq \regqk(t) + \mbf{v}(t),$$
where $\mbf{v}(t)$ is a white process with covariance $\covnoise$. In
order to formulate a flexible data-driven error model
\eqref{eq:noise_model}, we overparametrize it using a high-dimensional $\regqk(t)$. In this case, regularization of $\nonq$ is desirable, which is achieved by \eqref{eq:pz}. Note that $\covq$ and $\covnoise$ are both unknown. Estimating these covariance matrices corresponds to using a data-adaptive regularization, as we show in subsequent sections.
\end{rem}

\begin{rem}
 The nonlinear function $\regqk(t)$ of $\data_{t-1}$ can
be seen as a basis expansion which is chosen to yield a flexible model
structure of the errors. 
In the examples below we will use the Laplace operator basis functions \cite{SolinSarkka2014_hilbert}. 
Other possible choices include the polynomial basis functions, Fourier
basis functions, wavelets, etc. \cite{Sjoberg1995_nonlinear,
  Ljung1998_sysid, vandenHofNinness2005_system}.
\end{rem}

\begin{rem} In \eqref{eq:predictor}, $\hat{\out}(t)$ is a
one-step-ahead predictor. However, the framework can be readily
applied to $k$-step-ahead prediction where $\regpk(t)$ and $\regqk(t)$
depend on $\out(1), \dots, \out(t-k)$.
\end{rem}

\section{Latent variable framework}\label{sec:LAVAframework}

Given a record of $N$ data samples, $\data_{N}$, our goal is to estimate $\linp$ and $\nonq$ to form the refined predictor \eqref{eq:predictor}. In Section \ref{sec:MML}, we employ the maximum likelihood approach based on the likelihood function $p(\Out |
\linp, \covq, \covnoise)$, which requires the estimation of nuisance parameters $\covq$ and $\covnoise$. For notational simplicity, we write the parameters as $\para = \{ \linp, \covq, \covnoise \}$ and in Section~\ref{sec:latent} we show how an estimator of $\nonq$ is obtained as a function of $\para$ and $\data_{N}$.

\subsection{Parameter estimation}\label{sec:MML}

We  write the output samples in matrix form as
\begin{equation*}
	\Out = \begin{bmatrix}\out(1) & \cdots & \out(N)\end{bmatrix} \in \mathbb{R}^{n_y \times N}.
\end{equation*}
In order to obtain maximum likelihood estimates of $\para$, we first derive the likelihood function by marginalizing out the latent variables from the data distribution:
\begin{equation}\label{eq:lik}
p(\Out | \para) = \int p(\Out | \para, \nonq) p(\nonq) d\nonq,
\end{equation}
where the data distribution $p(\Out | \para, \nonq)$ and $p(\nonq)$ are given by \eqref{eq:noise_model} and \eqref{eq:pz}, respectively.

To obtain a closed-form expression of \eqref{eq:lik}, we begin by constructing the regression matrices
\begin{align*}
	\regp &= \begin{bmatrix} \regpk(1) & \cdots & \regpk(N) \end{bmatrix} \in \mathbb{R}^{p \times N}, \\
	\regq &= \begin{bmatrix} \regqk(1) & \cdots & \regqk(N) \end{bmatrix} \in \mathbb{R}^{q \times N}.
\end{align*}
It is shown in Appendix~\ref{app:A} that \eqref{eq:lik} can be written as
\begin{equation}\label{eq:lik2}
	p(\Out | \para) = \frac{1}{(2\pi)^{N n_y} |\covout|} \exp\left( -\frac{1}{2} \| \out-\regpv \linpv \|^2_{\covout^{-1}} \right),
\end{equation}
where
\begin{equation}\label{eq:vec_variables}
\out = \mvec(\Out),  \quad \linpv = \mvec(\linp), \quad   \nonqv = \mvec(\nonq) ,
\end{equation}
are vectorized variables, and 
\begin{align}
	\regpv &= \regp^{\top} \otimes \eye_{n_y}, \quad \regqv = \regq^{\top} \otimes \eye_{n_y},\label{eq:vec_reg} \\
	\covout &\triangleq \regqv \covq \regqv^{\top} + \eye_N \otimes \covnoise \label{eq:covout}.
\end{align}
 Note that \eqref{eq:lik2} is not a
Gaussian distribution in general, since $\covout$ may be a function of
$\Out$. It follows that maximizing \eqref{eq:lik2} is equivalent to solving
\begin{equation}\label{eq:MML}
	\min_{\para} \; \cost(\para),
\end{equation}
where
\begin{equation}
	\boxed{\cost(\para) \triangleq \| \out - \regpv \linpv \|^2_{\covout^{-1}}+ \ln | \covout |}
\label{eq:ML_func}
\end{equation}
and $\out - \regpv \linpv =  \mvec( \Out - \linp \regp  ) =
\mvec([\error(1) \: \cdots \: \error(N)])$ is nothing but the vector of
nominal prediction errors.

\subsection{Latent variable estimation}\label{sec:latent}

Next, we turn to the latent variable $\nonq$ which is used to model
the nominal prediction error $\error(t)$ in \eqref{eq:noise_model}. 
As we show in Appendix~\ref{app:A}, the conditional distribution of $\nonq$ is Gaussian and can be written as 
\begin{equation}\label{eq:condZ}
	p(\nonq | \para, \Out ) = \frac{1}{\sqrt{(2\pi)^{n_y q} |\covcondz|}} \exp\left( -\frac{1}{2} \| \nonqv - \condz \|_{\covcondz^{-1}}^2 \right),
\end{equation}
with conditional mean 
\begin{equation}\label{eq:condZmean}
	\condz = \covq \regqv^{\top} \covout^{-1} (\out - \regpv \linpv),
\end{equation}
and covariance matrix
\begin{equation}\label{eq:condZcov}
	\covcondz = \left(  \covq^{-1}  + \regqv^{\top} (\eye_N \otimes \covnoise^{-1}) \regqv \right)^{-1}.
\end{equation}
An estimate $\what{\nonq}$ is then given by evaluating the conditional (vectorized)
mean \eqref{eq:condZmean} at the
optimal estimate $\what{\para}$ obtained via \eqref{eq:MML}.

\section{Majorization-minimization approach}\label{sec:proposed}

The quantities in the refined predictor \eqref{eq:predictor} are
readily obtained from the solution of \eqref{eq:MML}. In general, $\cost(\para)$ may have local minima and \eqref{eq:MML} must be tackled using computationally efficient iterative methods to find the optimum. The obtained estimates will then depend on the choice of initial point $\para_0$.  Such methods includes the majorization-minimization approach \cite{HunterLange2004_mmtutorial,WuLange2010_mm}, which in turn include Expectation Maximization \cite{DempsterEtAl1977_em} as a special case.

The majorization-minimization approach is based on finding a majorizing function $V'(\para|\wtilde{\para})$ with the following properties:
\begin{equation}\label{eq:MMfunc}
\cost(\para) \leq \cost'(\para|\wtilde{\para}) , \quad \forall \para
\end{equation}
with equality when $\para = \wtilde{\para}$. The key is to find a majorizing function that is simpler to minimize than $\cost(\para)$. Let $\para_{k+1}$ denote the minimizer of $\cost'(\para|\para_{k})$. Then  
\begin{equation}\label{eq:MMsequence}
\begin{split}
\cost(\para_{k+1}) &\leq  \cost'(\para_{k+1}|\para_{k}) \leq
\cost'(\para_{k}|\para_{k}) = \cost(\para_{k}).
\end{split}
\end{equation}
This property leads directly to an iterative scheme that decreases $\cost(\para)$ monotonically, starting from an initial estimate $\para_0$.

Given the overparameterized error model \eqref{eq:noise_model}, it is natural to initialize at points in the parameter space which correspond to the nominal predictor model structure \eqref{eq:y0}. 
That is, 
\begin{equation}
\para_0 = \{ \linp_0, \0,\covnoise_0
\}, \qquad \text{where} \quad \covnoise_0 \succ \0, 
\label{eq:para_init}
\end{equation}
at which $\what{\nonq} \equiv \0$. 


\subsection{Convex majorization}

For a parsimonious parameterization and computationally advantageous
formulation, we consider a diagonal structure of the covariance
matrices in \eqref{eq:noise_model}, i.e., we let
\begin{equation}\label{eq:covnoise}
\covnoise = \text{diag}(\sigma_1, \sigma_2, \dots, \sigma_{n_y}),
\end{equation}
and we let $\covq_i = \text{diag}( d_{i,1}, \ldots, d_{i,q} )$ denote the covariance matrix corresponding to the $i$th row of $\nonq$, so that 
\begin{equation}\label{eq:covq}
	\covq = \sum_{i=1}^{n_y} \covq_i \otimes \matbase_{i,i}.
\end{equation}
We begin by majorizing \eqref{eq:ML_func} via linearization of the concave term
$\ln |\covout|$:
\begin{equation}\label{eq:tangentplane}
\begin{split}
	\ln |\covout| &\leq \ln | \wtilde{\covout}| - \tr\{\wtilde{\covout}^{-1} (\eye_N \otimes \wtilde{\covnoise})\} - \tr\{\regqv^\transp \wtilde{\covout}^{-1}  \regqv \wtilde{\covq}\} \\
	&\quad + \tr\{\wtilde{\covout}^{-1} (\eye_N \otimes \covnoise)\} + \tr\{ \regqv^{\transp} \wtilde{\covout}^{-1} \regqv \covq \},
\end{split}
\end{equation}
where $\wtilde{\covq}$ and $\wtilde{\covnoise}$ are arbitrary diagonal
covariance matrices and $\wtilde{\covout}$ is obtained by inserting
$\wtilde{\covq}$ and $\wtilde{\covnoise}$ into \eqref{eq:covout}. 
The right-hand side of the inequality above is a majorizing tangent
plane to $\ln |\covout|$, see Appendix~\ref{app:lnapprox}. The use of \eqref{eq:tangentplane} yields a convex majorizing function of $V(\para)$ in \eqref{eq:MML}:
\begin{equation}\label{eq:majorizeV}
\begin{split}
V'(\para|\wtilde{\para}) &= \| \out - \regpv \linpv \|^2_{\covout^{-1}}+ \tr\{ \wtilde{\covout}^{-1} (\eye_N \otimes \covnoise) \} \\
&\quad  + \tr\{ \regqv^{\transp} \wtilde{\covout}^{-1} \regqv \covq \} +\wtilde{K},
\end{split}
\end{equation}
where $\wtilde{K} =  \ln | \wtilde{\covout}| - \tr\{
\wtilde{\covout}^{-1} (\eye_N \otimes \wtilde{\covnoise}) \} - \tr\{
\regqv^\transp \wtilde{\covout}^{-1}  \regqv \wtilde{\covq}\}$ is a
constant. To derive a computationally efficient algorithm for minimizing \eqref{eq:majorizeV}, the following theorem is useful:
\begin{thm}\label{thm:majorize}
The majorizing function \eqref{eq:majorizeV} can also be written as
\begin{equation}\label{eq:VprimZequality}
V'(\para|\wtilde{\para}) = \min_{\nonq} \;
V'(\para|\nonq, \wtilde{\para}) 
\end{equation}
where
\begin{equation}\label{eq:VprimZ}
\begin{split}
V'(\para|\nonq, \wtilde{\para}) &= \| \Out - \linp \regp - \nonq \regq
\|^2_{\covnoise^{-1}}+ \| \mvec(\nonq) \|_{\covq^{-1}}^2 \\ 
&\quad+ \tr\{ \wtilde{\covout}^{-1} (\eye_N \otimes \covnoise) \} + \tr\{ \regqv^{\transp} \wtilde{\covout}^{-1} \regqv \covq \} + \wtilde{K}  .
\end{split}
\end{equation}
The minimizing $\nonq$ is given by the conditional mean \eqref{eq:condZmean}.
\end{thm}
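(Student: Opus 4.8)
The plan is to establish the reformulation by a direct ``complete the square'' computation combined with the matrix inversion lemma. First, note that in \eqref{eq:VprimZ} the last three terms $\tr\{\wtilde{\covout}^{-1}(\eye_N\otimes\covnoise)\}$, $\tr\{\regqv^{\transp}\wtilde{\covout}^{-1}\regqv\covq\}$ and $\wtilde{K}$ do not depend on $\nonq$ and coincide term-by-term with the corresponding terms in \eqref{eq:majorizeV}. Hence it suffices to show that
\[
\min_{\nonq}\Bigl[\,\| \Out - \linp \regp - \nonq \regq \|^2_{\covnoise^{-1}} + \| \mvec(\nonq) \|_{\covq^{-1}}^2\,\Bigr] = \| \out - \regpv \linpv \|^2_{\covout^{-1}},
\]
and that the minimizing $\nonq$ is given by \eqref{eq:condZmean}.

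Next I would vectorize the left-hand side. Writing $\nonqv = \mvec(\nonq)$ and using $\mvec(\linp\regp) = \regpv\linpv$, $\mvec(\nonq\regq) = \regqv\nonqv$ from \eqref{eq:vec_reg}, together with the elementary identity $\|\mbf{X}\|^2_{\mbf{W}} = \mvec(\mbf{X})^{\transp}(\eye_N\otimes\mbf{W})\mvec(\mbf{X})$ for any $\mbf{X}$ with $N$ columns, the bracketed objective becomes the strictly convex quadratic form in $\nonqv$
\[
(\error - \regqv\nonqv)^{\transp}(\eye_N\otimes\covnoise^{-1})(\error - \regqv\nonqv) + \nonqv^{\transp}\covq^{-1}\nonqv,
\]
where $\error \triangleq \out - \regpv\linpv$ is the vector of nominal prediction errors. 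Setting the gradient to zero yields the minimizer $\what{\nonqv} = \bigl(\regqv^{\transp}(\eye_N\otimes\covnoise^{-1})\regqv + \covq^{-1}\bigr)^{-1}\regqv^{\transp}(\eye_N\otimes\covnoise^{-1})\error$. Applying the ``push-through'' identity $\bigl(\covq^{-1} + \regqv^{\transp}(\eye_N\otimes\covnoise^{-1})\regqv\bigr)^{-1}\regqv^{\transp}(\eye_N\otimes\covnoise^{-1}) = \covq\regqv^{\transp}\covout^{-1}$, which holds by the definition \eqref{eq:covout} of $\covout$, identifies $\what{\nonqv}$ with $\condz$ in \eqref{eq:condZmean}. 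Substituting $\what{\nonqv}$ back into the quadratic form and invoking the Woodbury identity on $\covout^{-1} = (\regqv\covq\regqv^{\transp} + \eye_N\otimes\covnoise)^{-1}$ collapses the minimum value to $\error^{\transp}\covout^{-1}\error = \|\out - \regpv\linpv\|^2_{\covout^{-1}}$, which is exactly \eqref{eq:majorizeV}.

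I do not expect a genuine obstacle here; the only place requiring care is the Kronecker/vectorization bookkeeping --- checking that the weighting $\eye_N\otimes\covnoise^{-1}$ on the stacked residuals corresponds to the column-wise weighted Frobenius norm, and that the block-diagonal $\covq$ of \eqref{eq:covq} is handled consistently with $\nonqv = \mvec(\nonq)$. As an alternative that sidesteps the algebra, one can recognize the bracketed expression as $-2\ln\bigl[p(\Out|\para,\nonq)p(\nonq)\bigr]$ up to $\nonq$-independent constants (by \eqref{eq:noise_model} and \eqref{eq:pz}); minimizing over $\nonq$ is then marginalization, so the minimum value equals $-2\ln p(\Out|\para)$ up to the same constants, which is $\|\out-\regpv\linpv\|^2_{\covout^{-1}}$ by \eqref{eq:lik2}, and the minimizer is the mode of the Gaussian posterior \eqref{eq:condZ}, i.e.\ its mean \eqref{eq:condZmean}. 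This probabilistic route simply reuses the computations already carried out in Appendix~\ref{app:A}.
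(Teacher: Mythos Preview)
Your main argument is correct and is essentially the paper's proof spelled out in detail: the paper's two-line proof just cites the completing-the-square identity \eqref{eq:the_equality} from Appendix~\ref{app:A}, which is exactly what your gradient computation together with the push-through and Woodbury identities re-derives.

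One caveat on your alternative ``probabilistic route'': the sentence ``minimizing over $\nonq$ is then marginalization'' is not right. Minimizing $-2\ln\bigl[p(\Out|\para,\nonq)p(\nonq)\bigr]$ over $\nonq$ gives a profile, not a marginal; for Gaussians the two differ by $\ln|\covcondz|$ (plus dimension constants), and $\covcondz$ in \eqref{eq:condZcov} depends on $\covq$ and $\covnoise$, hence on $\para$. So the claim that the minimum equals $-2\ln p(\Out|\para)$ ``up to the same constants'' fails---the discrepancy is $\para$-dependent. The correct shortcut is the one you state in your very last sentence: invoke the factorization \eqref{eq:the_equality} directly, which shows the bracketed quadratic equals $\|\out-\regpv\linpv\|^2_{\covout^{-1}} + \|\nonqv-\condz\|^2_{\covcondz^{-1}}$, and then minimize. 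That is precisely what the paper does.
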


\begin{proof}
The problem in \eqref{eq:VprimZequality} has a minimizing $\nonq$
which, after vectorization, equals $\condz$ in \eqref{eq:condZmean}.
Inserting the minimizing $\nonq$ into \eqref{eq:VprimZ} and using \eqref{eq:the_equality} yields  \eqref{eq:majorizeV}.
\end{proof}

\begin{rem}
The augmented form in \eqref{eq:majorizeV}, enables us to solve for the nuisance parameters $\covq$ and $\covnoise$ in closed-form and also yields the conditional  mean $\what{\nonq}$ as a by-product.
\end{rem}

To prepare for the minimization of the function \eqref{eq:VprimZ} we write the matrix quantities using variables that denote the $i$th rows of the following matrices:
\begin{equation*}
\Out = \begin{bmatrix}
\vdots \\
\out^\top_i \\
\vdots
\end{bmatrix}, \quad 
\linp = \begin{bmatrix}
\vdots \\
\linpv^\top_i \\
\vdots
\end{bmatrix}, \quad 
\nonq = \begin{bmatrix}
\vdots \\
\nonqv^\top_i \\
\vdots
\end{bmatrix}, \quad
\regq = \begin{bmatrix}
\vdots \\
\regqk^\top_i \\
\vdots
\end{bmatrix}.
\end{equation*}

\begin{thm}\label{thm:l1_formulation}
	After concentrating out the nuisance parameters, the minimizing arguments $\linp$ and $\nonq$ of the function \eqref{eq:VprimZ} are obtained by solving the following convex problem:
\begin{equation}\label{eq:concentrated}
	\min_{\linp, \nonq}\; \sum_{i=1}^{n_y} \left( \|
            \out_i - \regp^{\top} \linpv_i - \regq^{\top} \nonqv_i
            \|_2 + \| \weights_i \odot \nonqv_i \|_1\right)
\end{equation}
where
\begin{align}
	\weights_i &=  \begin{bmatrix} w_{i, 1} & \cdots & w_{i, q} \end{bmatrix}^\top \label{eq:weights}, \quad w_{i,j} =  \sqrt{ \frac{{\regqk}_j^\transp \wtilde{\covout}_i^{-1}{\regqk}_j}{\tr\{\wtilde{\covout}_i^{-1}\}}}  \\
	\wtilde{\covout}_i &= \regq \wtilde{\covq}_i \regq^\transp + \wtilde{\sigma}_i \eye_N.
\end{align}
The closed-form expression for the minimizing nuisance parameters \eqref{eq:covnoise} and \eqref{eq:covq} are given by 
\begin{align}
	\hat{\sigma}_i &= \frac{\| \out_i - \regp^\transp \linpv_i - \regq^\transp \nonqv_i\|_2}{\sqrt{\tr\{\wtilde{\covout}_i^{-1}\}}},\,	\hat{d}_{i,j} =\frac{|z_{i,j}|}{\sqrt{ {\regqk}_j^\transp \wtilde{\covout}_i^{-1} {\regqk}_j}  }. \label{eq:dhat}
\end{align}
\end{thm}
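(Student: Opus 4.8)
The plan is to reduce the minimization of $V'(\para|\nonq,\wtilde{\para})$ in \eqref{eq:VprimZ} to a family of decoupled scalar minimizations by exploiting the diagonal parametrizations \eqref{eq:covnoise}--\eqref{eq:covq}. Substituting these, the fit term splits across the output index $i$ as $\| \Out - \linp \regp - \nonq \regq \|^2_{\covnoise^{-1}}=\sum_{i=1}^{n_y}\sigma_i^{-1}\|\out_i-\regp^{\transp}\linpv_i-\regq^{\transp}\nonqv_i\|_2^2$, using $\covnoise=\sum_i\sigma_i\matbase_{i,i}$ and cyclicity of the trace, and the penalty splits as $\|\mvec(\nonq)\|^2_{\covq^{-1}}=\sum_i\sum_j z_{i,j}^2/d_{i,j}$, since \eqref{eq:covq} makes $\covq$ diagonal with the $((j-1)n_y+i)$th diagonal entry equal to $d_{i,j}$. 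The only terms requiring care are the two trace terms that are linear in $(\covq,\covnoise)$.

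To handle those, I would first show that $\wtilde{\covout}$ in \eqref{eq:covout} inherits a block-Kronecker structure from \eqref{eq:covq}: applying the mixed-product rule to $\regqv\wtilde{\covq}\regqv^{\transp}=(\regq^{\transp}\otimes\eye_{n_y})\big(\sum_i\wtilde{\covq}_i\otimes\matbase_{i,i}\big)(\regq\otimes\eye_{n_y})$ and to $\eye_N\otimes\wtilde{\covnoise}=\sum_i\wtilde{\sigma}_i(\eye_N\otimes\matbase_{i,i})$ gives $\wtilde{\covout}=\sum_i\wtilde{\covout}_i\otimes\matbase_{i,i}$, with $\wtilde{\covout}_i$ the $N\times N$ matrix appearing in the theorem. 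Since $\matbase_{i,i}\matbase_{j,j}=\0$ for $i\neq j$ and $\matbase_{i,i}^2=\matbase_{i,i}$, this inverts block-by-block, $\wtilde{\covout}^{-1}=\sum_i\wtilde{\covout}_i^{-1}\otimes\matbase_{i,i}$. Then $\tr\{A\otimes B\}=\tr(A)\tr(B)$ with $\tr\{\matbase_{i,i}\}=1$ yields $\tr\{\wtilde{\covout}^{-1}(\eye_N\otimes\covnoise)\}=\sum_i\sigma_i\tr\{\wtilde{\covout}_i^{-1}\}$, and additionally using the rank-one identity $\regq^{\transp}\matbase_{j,j}\regq=\regqk_j\regqk_j^{\transp}$ (with the $q\times q$ basis matrix, $\regqk_j$ the $j$th row of $\regq$) yields $\tr\{\regqv^{\transp}\wtilde{\covout}^{-1}\regqv\covq\}=\sum_i\sum_j d_{i,j}\,\regqk_j^{\transp}\wtilde{\covout}_i^{-1}\regqk_j$. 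Collecting everything, $V'(\para|\nonq,\wtilde{\para})=\sum_{i=1}^{n_y}V'_i+\wtilde{K}$, where each $V'_i$ depends only on $(\linpv_i,\nonqv_i,\sigma_i,\covq_i)$.

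Next I would concentrate out the nuisance parameters, which have now become completely decoupled. For fixed $(\linpv_i,\nonqv_i)$, the part of $V'_i$ in $\sigma_i>0$ is $a_i/\sigma_i+b_i\sigma_i$ with $a_i=\|\out_i-\regp^{\transp}\linpv_i-\regq^{\transp}\nonqv_i\|_2^2$ and $b_i=\tr\{\wtilde{\covout}_i^{-1}\}>0$, and the part in each $d_{i,j}>0$ is $z_{i,j}^2/d_{i,j}+c_{i,j}d_{i,j}$ with $c_{i,j}=\regqk_j^{\transp}\wtilde{\covout}_i^{-1}\regqk_j\geq 0$. The elementary identity $\min_{x>0}(a/x+bx)=2\sqrt{ab}$, attained at $x=\sqrt{a/b}$ when $a>0$, immediately produces the closed forms \eqref{eq:dhat} for $\hat{\sigma}_i$ and $\hat{d}_{i,j}$ and the concentrated objective $\sum_i 2\big(\sqrt{a_ib_i}+\sum_j\sqrt{c_{i,j}}\,|z_{i,j}|\big)+\wtilde{K}$. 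Factoring $\sqrt{b_i}=\sqrt{\tr\{\wtilde{\covout}_i^{-1}\}}$ out of the $i$th summand turns $\sqrt{c_{i,j}/b_i}$ into precisely the weight $w_{i,j}$ of \eqref{eq:weights}, so this equals $\sum_i 2\sqrt{\tr\{\wtilde{\covout}_i^{-1}\}}\,\big(\|\out_i-\regp^{\transp}\linpv_i-\regq^{\transp}\nonqv_i\|_2+\|\weights_i\odot\nonqv_i\|_1\big)+\wtilde{K}$.

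Finally, because this concentrated objective is a sum over $i$ of blocks in disjoint variables $(\linpv_i,\nonqv_i)$, each multiplied by the positive constant $2\sqrt{\tr\{\wtilde{\covout}_i^{-1}\}}$, and $\wtilde{K}$ is variable-free, dropping these multiplicative and additive constants leaves the set of minimizers in $(\linp,\nonq)$ unchanged; the remaining problem is exactly \eqref{eq:concentrated}, which is convex because every summand is a Euclidean norm plus a weighted $\ell_1$ norm of affine functions of $(\linpv_i,\nonqv_i)$ (indeed $V'(\para|\nonq,\wtilde{\para})$ is already jointly convex, being a sum of quadratic-over-linear terms $a_i/\sigma_i$, $z_{i,j}^2/d_{i,j}$ and linear terms, so partial minimization over the nuisance block preserves convexity). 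I expect the only genuinely delicate step to be the Kronecker bookkeeping in the second paragraph — establishing $\wtilde{\covout}=\sum_i\wtilde{\covout}_i\otimes\matbase_{i,i}$ and propagating it through the two trace terms so that they collapse to exactly the normalizer $\tr\{\wtilde{\covout}_i^{-1}\}$ and the quadratic forms $\regqk_j^{\transp}\wtilde{\covout}_i^{-1}\regqk_j$ that define the weights. A minor point deserving a remark is the case $z_{i,j}=0$ (or a degenerate $\regqk_j=\0$), where the minimizing $d_{i,j}$ sits on the boundary of $\{d_{i,j}>0\}$; there the infimum still equals the stated value by a limiting argument in $a_i/\sigma_i+b_i\sigma_i$ as $a_i\to 0$, so the concentrated objective is as claimed.
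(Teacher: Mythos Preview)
Your proposal is correct and follows essentially the same route as the paper: decouple $V'(\para|\nonq,\wtilde{\para})$ across $i$ via the block structure $\wtilde{\covout}=\sum_i\wtilde{\covout}_i\otimes\matbase_{i,i}$, minimize each $a/\sigma+b\sigma$ and $z^2/d+cd$ in closed form, and then factor the positive constants $\sqrt{\tr\{\wtilde{\covout}_i^{-1}\}}$ out of the separable concentrated sum. The only cosmetic differences are that the paper computes the nuisance minimizers by setting derivatives to zero rather than quoting $\min_{x>0}(a/x+bx)=2\sqrt{ab}$, and it omits your remark on the boundary case $z_{i,j}=0$.
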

\begin{proof}
	See Appendix \ref{app:l1_formulation}.
\end{proof}


\begin{rem}
Problem \eqref{eq:concentrated} contains a data-adaptive regularizing term which typically leads to parsimonious estimates of $\nonq$, cf. \cite{StoicaEtAl2014_weightedspice}.
\end{rem}
\begin{rem} Majorizing at a nominal predictor model,
i.e. $\wtilde{\para} = \{ \wtilde{\linp}, \0, \wtilde{\covnoise} \}$
as discussed above, yields $\wtilde{\covout}_i = \wtilde{\sigma}_i
\eye_N$ and the weights are given by 
\begin{equation}\label{eq:weights_omega0}
	w_{i,j} =\frac{\| {\regqk}_j \|_2}{\sqrt{N}}.
\end{equation}
Then problem \eqref{eq:concentrated} and consequently the minimization
of \eqref{eq:VprimZequality} becomes invariant with respect to
$\wtilde{\covnoise}$.
\end{rem}

The iterative scheme \eqref{eq:MMsequence} is executed by initializing
at  $\wtilde{\para} := \para_0$ and solving
\eqref{eq:concentrated}. The procedure is then repeated by updating
the majorization point using the new estimate $\wtilde{\para} := \what{\para}$.  It follows that the estimates will converge to a local minima of \eqref{eq:ML_func}.
The following theorem establishes that the local minima found, and thus the resulting predictor \eqref{eq:predictor}, is invariant to $\para_0$ in the form \eqref{eq:para_init}.

\begin{thm}\label{cor:omega_0} All initial points $\para_0$ in the form \eqref{eq:para_init} result in the same
sequence of minimizers $(\what{\linp}_k,\what{\nonq}_k)$ of \eqref{eq:concentrated}, for all $k>0$. Moreover, the sequence 
$(\what{\covq}_k,\what{\covnoise}_k)$ converges to a unique point when $k
\rightarrow \infty$.
\end{thm}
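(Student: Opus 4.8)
The plan is to track precisely how the majorization point $\wtilde\para$ enters the convex problem \eqref{eq:concentrated} and to exploit a scale invariance of its weights. Write $\covnoise_0=\mathrm{diag}(\sigma_{0,1},\dots,\sigma_{0,n_y})$ for an admissible initialization \eqref{eq:para_init}. The first thing I would record is the elementary observation that \eqref{eq:concentrated} depends on $\wtilde\para$ only through the matrices $\wtilde\covout_i$ in \eqref{eq:weights}, that $\wtilde\linp$ does not enter it at all, and that it is invariant under a positive rescaling $\wtilde\covout_i\mapsto\alpha_i\wtilde\covout_i$ ($\alpha_i>0$): such a scalar cancels between the numerator $\regqk_j^\transp\wtilde\covout_i^{-1}\regqk_j$ and the denominator $\tr\{\wtilde\covout_i^{-1}\}$ of each weight $w_{i,j}$. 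Hence the minimizing $(\what\linp_k,\what\nonq_k)$ (or, should \eqref{eq:concentrated} fail to be strictly convex, the whole minimizing set) is a function of the ``directions'' of $\wtilde\covout_1^{(k)},\dots,\wtilde\covout_{n_y}^{(k)}$ only.

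Next I would prove, by induction on $k$, that for every admissible $\para_0$ one has $\wtilde\covout_i^{(k)}=\sigma_{0,i}^{\,2^{-(k-1)}}\mbf{M}_i^{(k)}$ with $\mbf{M}_i^{(k)}\succ\0$ \emph{independent of} $\para_0$. The base case $k=1$ is exactly the remark following Theorem~\ref{thm:l1_formulation}: with $\wtilde\covq=\0$ one gets $\wtilde\covout_i^{(1)}=\sigma_{0,i}\eye_N$ (so $\mbf{M}_i^{(1)}=\eye_N$) and the weights reduce to \eqref{eq:weights_omega0}. For the inductive step, the scale invariance above makes the weights $\weights_i^{(k)}$, hence the solution $(\what\linp_k,\what\nonq_k)$ of \eqref{eq:concentrated}, independent of $\para_0$; substituting this solution and $\wtilde\covout_i^{(k)}=\sigma_{0,i}^{2^{-(k-1)}}\mbf{M}_i^{(k)}$ into the closed forms \eqref{eq:dhat} gives $\hat\sigma_i^{(k)}=\sigma_{0,i}^{\,2^{-k}}s_i^{(k)}$ and $\hat d_{i,j}^{(k)}=\sigma_{0,i}^{\,2^{-k}}t_{i,j}^{(k)}$ with $s_i^{(k)},t_{i,j}^{(k)}$ nonnegative and $\para_0$-free (they depend only on the $\para_0$-free quantities $\what\linp_k,\what\nonq_k,\mbf{M}_i^{(k)}$); and then $\wtilde\covout_i^{(k+1)}=\regq\what\covq_i^{(k)}\regq^\transp+\hat\sigma_i^{(k)}\eye_N$ has the asserted form with $\mbf{M}_i^{(k+1)}=\regq\,\mathrm{diag}(t_{i,1}^{(k)},\dots,t_{i,q}^{(k)})\,\regq^\transp+s_i^{(k)}\eye_N\succ\0$ (generically). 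Since the step has in passing exhibited $(\what\linp_k,\what\nonq_k)$ as $\para_0$-free for every $k\ge1$, the first assertion is done.

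For the second assertion I would first invoke the convergence of the iterates already stated before the theorem (the majorization-minimization sequence decreases $\cost$ monotonically and converges), so that $(\what\covq_k,\what\covnoise_k)$ converges, and then identify its limit as $\para_0$-free. The crux is that the exponent $2^{-k}\to0$, whence $\sigma_{0,i}^{2^{-k}}\to1$ for \emph{every} $\sigma_{0,i}>0$: since $\hat\sigma_i^{(k)}=\sigma_{0,i}^{2^{-k}}s_i^{(k)}$ converges and the scale factor tends to $1$, the $\para_0$-independent sequence $s_i^{(k)}=\hat\sigma_i^{(k)}/\sigma_{0,i}^{2^{-k}}$ converges to the same limit, which is therefore $\para_0$-independent; likewise for $\hat d_{i,j}^{(k)}$ and $t_{i,j}^{(k)}$. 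Hence $(\what\covq_k,\what\covnoise_k)$ converges to a unique point, the same for all admissible $\para_0$.

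The main obstacle is really only this last convergence input: the algebraic identities above make every finite-$k$ iterate $\para_0$-free (for $\linp,\nonq$) or carry nothing but the vanishing scale $\sigma_{0,i}^{2^{-k}}$ (for the covariances), but ``converges to a unique point'' still leans on the general majorization-minimization convergence statement preceding the theorem. A fully self-contained version would additionally need $\cost$ bounded below on the relevant region (routine given $\covnoise_0\succ\0$ and the structure of \eqref{eq:covout}) and a uniqueness-of-fixed-point argument for the iteration map accessible from \eqref{eq:para_init}, which I would base on the convexity of \eqref{eq:concentrated} at the limit.
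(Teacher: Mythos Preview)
Your proposal is correct and follows essentially the same route as the paper's proof: both exploit that the weights \eqref{eq:weights} are invariant under positive rescaling $\wtilde\covout_i\mapsto\alpha_i\wtilde\covout_i$, then establish by induction that the covariance iterates carry only a scalar factor $(\sigma_{0,i})^{2^{-k}}$ in front of a $\para_0$-independent part, and finally use $(\sigma_{0,i})^{2^{-k}}\to1$. The only cosmetic difference is that the paper fixes a reference initialization $\wbar\para_0=\{\0,\0,\eye_{n_y}\}$ and shows $\covq_i^{(k)}=(\sigma_{0,i})^{2^{-k}}\wbar\covq_i^{(k)}$, $\sigma_i^{(k)}=(\sigma_{0,i})^{2^{-k}}\wbar\sigma_i^{(k)}$, whereas you abstract out the $\para_0$-free part directly as $\mbf{M}_i^{(k)}$; these are equivalent bookkeeping choices.
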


\begin{proof}
	See Appendix \ref{app:omega_0}.
\end{proof}

\begin{rem}
As a result we may initialize the scheme
\eqref{eq:MMsequence} at $\para_0 = \{ \0, \0, \eye_{n_y} \}$.
This obviates the need for carefully selecting an initialization point, which would be needed in e.g. the Expectation Maximization algorithm.
\end{rem}


\subsection{Recursive computation}\label{sec:lava-r}
We now show that the convex problem \eqref{eq:concentrated} can be solved
recursively, for each new sample $\out(t)$ and $\inp(t)$.
\subsubsection{Computing $\what{\linp}$}
If we fix $\nonq$ and only solve for $\linp$, the solution is given by 
\begin{equation}\label{eq:est_linp}
	\what{\linp} = \wbar{\linp} - \nonq \regpregq^{\top}
\end{equation}
where 
\[
	\wbar{\linp} = \Out  \regp ^\dagger \quad \text{and} \quad \regpregq^{\top} =
        \regq \regp^\dagger.
\]
Note that both $\wbar{\linp}$ and $\regpregq$ are independent of
$\nonq$, and that they can be computed for each sample $t$ using a
standard recursive least-squares (LS) algorithm:
\begin{align}
	\wbar{\linp}(t) &= \wbar{\linp}(t-1) + (\out(t) - \wbar{\linp}(t-1) \regpk(t)) \regpk^\transp(t) \gain(t) \label{eq:RLS_linp}\\
	\regpregq(t) &= \regpregq(t-1) + \gain(t) \regpk(t) (
        \regqk^{\transp}(t) - \regpk^\transp(t)
        \regpregq(t-1)) \label{eq:RLS_H} \\
	\gain(t) &= \gain(t-1) - \frac{\gain(t-1) \regpk(t)\regpk^\transp(t) \gain(t-1)}{1 + \regpk^\transp(t) \gain(t-1) \regpk(t)}. \label{eq:RLS_gain} 
\end{align}

\begin{rem}
Natural initial values are $\wbar{\linp}(0) = \0$ and $\regpregq(0)
= \0$. The matrix $\regp^\dagger$ equals $\regp^\top (\regp \regp^\top)^{-1}$ when
$t\geq p$ samples yield a full-rank matrix $\regp$. The matrix
$\gain(t)$ converges to $(\regp \regp^\top)^{-1}$. A
common choice for the initial value of $\gain(t)$ is $\gain(0) = c \eye$, where a larger constant $c>0$ leads to a faster
convergence of \eqref{eq:RLS_gain},
cf. \cite{SoderstromStoica1988_system,StoicaAhgren2002_exactrls}.
\end{rem}

\subsubsection{Computing $\what{\nonq}$} Using \eqref{eq:est_linp}, we
concentrate out $\linp$ from \eqref{eq:concentrated} to obtain 
\[
	V'(\nonq) = \sum_{i=1}^{n_y} \left(  \| \recerror_i - \left( \regq^{\top} - \regp^{\top} \regpregq\right) \nonqv_i \|_2 + \| \weights_i \odot \nonqv_i \|_1 \right)
\]
where
\[
	\recerror_i = \out_i - \regp^{\top} \wbar{\linpv}_i.
\]
In Appendix \ref{app:algo} it is shown how the minimum of $V'(\nonq)$ can
be found via cyclic minimization with respect to the elements of
$\nonq$, similar to what has been done in
\cite{Zachariah&Stoica2015_onlinespice} in a simpler case. This iterative procedure is implemented using recursively
computed quantities and produces an estimate $\what{\nonq}(t)$ at sample $t$.

\subsubsection{Summary of the algorithm}\label{sec:algsummary} The algorithm
computes $\what{\linp}(t)$ and $\what{\nonq}(t)$ recursively by means of the following steps at each sample $t$:
\begin{enumerate}[i)]
\item Compute $\wbar{\linp}(t)$ and $\regpregq(t)$, using
  \eqref{eq:RLS_linp}-\eqref{eq:RLS_gain}.
\item Compute $\what{\nonq}(t)$ via the cyclic minimization of
  $V'(\nonq)$. Cycle through all elements $L$ times.
\item Compute $\what{\linp}(t) = \wbar{\linp}(t) - \what{\nonq}(t) \regpregq^{\top}(t)$
\end{enumerate}
The estimates are initialized as $\what{\linp}(0)=\0$ and
$\what{\nonq}(0)=\0$. In practice, small $L$ works well since we cycle
$L$ times through all elements of $\nonq$ for each new data
sample. The computational details are given in Algorithm~\ref{alg:rec}
in Appendix~\ref{app:algo}, which can be readily implemented e.g. in
\textsc{Matlab}. 


\section{Numerical experiments}\label{sec:numericalexamples}
In this section we evaluate the proposed method and compare it with
two alternative identification methods.

\subsection{Identification methods and experimental setup}\label{sec:setup}

The numerical experiments were conducted as follows. Three methods
have been used: LS identification of affine \ARX, \NARX{} using wavelet
networks (\textsc{Wave} for short), and the latent variable method (\textsc{Lava}) presented in this paper.
From
our numerical experiments we found that performing even only one iteration
of the majorization-minimization algorithm produces good results, and
doing so leads to a
computationally efficient recursive implementation (which we denote
\textsc{Lava-R} for \emph{r}ecursive).

For each method the function $\regpk(t)$ is taken as the linear regressor in \eqref{eq:linearphi}.
Then the dimension of $\regpk(t)$ equals $p = n_yn_a+ n_u n_b + 1$.
For affine ARX, the model is given by 
\[
	\hat{\out}_{ARX}(t) = \wbar{\linp} \regpk(t),
\]
where $\wbar{\linp}$ is estimated using recursive least squares \cite{SoderstromStoica1988_system}. 
Note that in \textsc{Lava-R}, $\wbar{\linp}$ is computed as a byproduct \eqref{eq:RLS_linp}.

For the wavelet network, \texttt{nlarx} in the System Identification
Toolbox for Matlab was used, with the number of nonlinear units automatically detected \cite{Ljung2007_Toolbox}.

For \textsc{Lava-R}, the model is given by \eqref{eq:predictor} and
$\what{\linp}, \what{\nonq}$ are found by the minimization of
\eqref{eq:concentrated} using $\wtilde{\para}_0 = \{ \0, \0,
\eye_{n_y} \}$. The minimization is performed using the recursive
algorithm in Section \ref{sec:algsummary} with $L=5$. The nonlinear function $\regqk(t)$ of the
data $\data_{t-1}$ can be chosen to be a set of
basis functions evaluated at $\regpk(t)$. Then $\nonq \regqk(t)$ can
be seen as a truncated basis expansion of some nonlinear function.
In the numerical examples, $\regqk(t)$ uses the Laplace basis
expansion due to its good approximation properties \cite{SolinSarkka2014_hilbert}. Each element in the expansion is given by 
\begin{equation}\label{eq:laplacebasis}
\gamma_{k_1, \dots, k_{p-1}}(t) = \prod^{p-1}_{i=1} \frac{1}{\sqrt{\ell_i}}\sin\left( \frac{
    \pi k_i(\varphi_i(t) + \ell_i)}{2\ell_i} \right),
\end{equation}
where $\ell_i$ are the boundaries of the inputs and outputs for each
channel and $k_i = 1,
\dots, M$ are the indices for each element of $\regqk(t)$. Then the
dimension of $$\regqk(t) = [\gamma_{1, \dots, 1}(t) \: \cdots
\: \gamma_{p-1, \dots, p-1}(t)]^\transp$$ equals $q = M^{p-1}$, where $M$
is a user parameter which determines the resolution of the basis.

Finally, an important part of the identification setup is the choice
of input signal. For a nonlinear system it is important to excite the
system both in frequency and in amplitude. For linear models a
commonly used input signal is a pseudorandom binary sequence (PRBS),
which is a signal that shifts between two levels in a certain
fashion. One reason for using PRBS is that it has good
correlation properties \cite{SoderstromStoica1988_system}. Hence,
PRBS excites the system well in frequency, but poorly in amplitude. A
remedy to the poor amplitude excitation is to multiply each interval
of constant signal level with a random factor that is uniformly distributed
on some interval $[0, A]$, cf. \cite{Wigren2006}. Hence, if the PRBS
takes the values $-1$ and $1$, then the resulting sequence will contain
constant intervals with random amplitudes between $-A$ and $A$. We denote
such a random amplitude sequence $\text{RS}(A)$ where $A$
is the maximum amplitude.

\subsection{Performance metric}
For the examples considered here the system does not necessarily
belong to the model class, and thus there is no true parameter vector
to compare with the estimated parameters. 
Hence, the different methods will instead be evaluated with respect to
the simulated model output $\hat{\out}_s(t)$. For \textsc{Lava-R}
\begin{align*}
	&\what{\regpk}(t) = [\hat{\out}^\top_s(t-1) \: \cdots \: \hat{\out}^\top_s(t-n_a) \;
        \inp^\top(t-1) \: \cdots \: \inp^\top(t-n_b) \; 1]^\top.\\
	&\what{\out}_s(t) = \what{\linp} \what{\regpk}(t) + \what{\nonq} \what{\regqk}(t)
\end{align*}
and $\what{\regqk}(t)$ is computed as
\eqref{eq:laplacebasis}  with $\regpk(t)$  replaced by $\hat{\regpk}(t)$.

The performance can then be evaluated using the root mean squared
error (RMSE) for each output channel $i$, 
\[
	\text{RMSE}_i = \sqrt{ \frac{1}{T} \sum_{t=1}^T\E \left[ \|
            y_i(t) - \hat{y}_{s,i}(t) \|_2^2\right ] }.
\]
The expectations are computed using 100 Monte Carlo simulations on validation data.

For the dataset collected from a real system, it is not possible to evaluate the expectation in the RMSE formula. For such sets we use the fit of the data, i.e.,
\[
	\text{FIT}_i = 100\left( 1 - \frac{\| \out_i - \hat{\out}_{s,i} \|_2}{\|\out_i - \bar{y}_i \1  \|_2 }\right),
\]
where $\hat{\out}_{s,i}$ contains the simulated outputs for channel
$i$, $\bar{y}_i$ is the empirical mean of $\out_i$ and $\1$ is a
vector of ones. Hence, FIT compares the simulated output errors with those obtained using the empirical mean as the model output. 

\subsection{System with saturation}\label{ex:Saturation}
Consider the following state-space model, 
\begin{align}
	x_1(t+1) &= \text{sat}_2[0.9 x_1(t) + 0.1 u_1(t) ] \\ 
	x_2(t+1) &= 0.08 x_1(t) + 0.9 x_2(t) + 0.6 u_2(t) \\
	\out(t) &= \mbf{x}(t) + \mbf{e}(t).
\end{align}
where $\mbf{x}(t) = \begin{bmatrix} x_1(t) & x_2(t) \end{bmatrix}^\transp$ and 
\[
	\text{sat}_a(x) = \left\{ \begin{array}{ll} x & \text{if } |x| < a \\ \sign(x) a & \text{if } |x| \geq a \end{array}\right. .
\]
A block-diagram for the above system is shown in Fig.~\ref{fig:blockSaturation}.
The measurement noise $\mbf{e}(t)$ was chosen as a white Gaussian process with
covariance matrix $\sigma \eye$ where $\sigma = 2.5 \cdot 10^{-3}$. 

Data was collected from the system using an $\text{RS}(A)$ input signal
for several different amplitudes $A$. The identification was performed
using $n_a = 1$, $n_b = 1$, $M=4$, and $N = 1000$ data samples. This means that $p=5$ and $q=256$, and therefore there are 10 parameters in $\linp$ and $512$ in $\nonq$. 
\begin{figure}[h]
\centering
\includegraphics[width = 0.6\columnwidth]{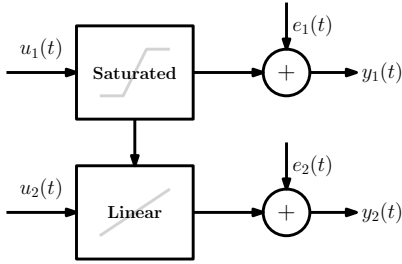}
\caption{A block diagram of the system used in Example \ref{ex:Saturation}.}
\label{fig:blockSaturation}
\end{figure}

Note that, for sufficiently low amplitudes $A$, $x_1(t)$ will be smaller than the
saturation level $a=2$ for
all $t$, and thus the system will behave as a linear system. However,
when $A$ increases, the saturation will affect the system output more and
more. The RMSE was computed for
eight different amplitudes $A$, and the result is shown in
Fig. \ref{fig:saturation}. It can be seen that for small amplitudes,
when the system is effectively linear, the \ARX{} model gives a
marginally better result than \textsc{Lava-R} and
\textsc{Wave}. However, as the amplitude is increased, the nonlinear
effects become more important, and \textsc{Lava-R} outperforms both \textsc{Wave} and \ARX{} models.

\begin{figure}[h]
\centering
\includegraphics[width = 0.8\linewidth]{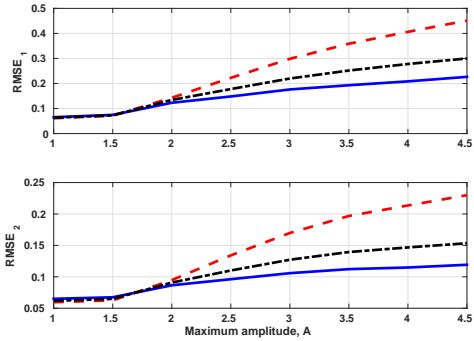}
\caption{The RMSE for Example~\eqref{ex:Saturation} computed for different input amplitudes, using \textsc{Lava-R} (solid), affine \ARX{} (dashed) and \textsc{Wave} (dash-dotted). }
\label{fig:saturation}
\end{figure}

\subsection{Water tank}\label{ex:WaterTank}
In this example a real cascade tank process is studied. It consists of
two tanks mounted on top of each other, with free outlets. The top
tank is fed with water by a pump. The input signal is the
voltage applied to the pump, and the output consists of the water
level in the two tanks. The setup is described in more detail in
\cite{Wigren2006}. The data set consists of 2500 samples collected
every five seconds. The first 1250 samples where used for
identification, and the last 1250 samples for validation. 

\begin{figure}[h]
\centering
\includegraphics[width = 0.8\linewidth]{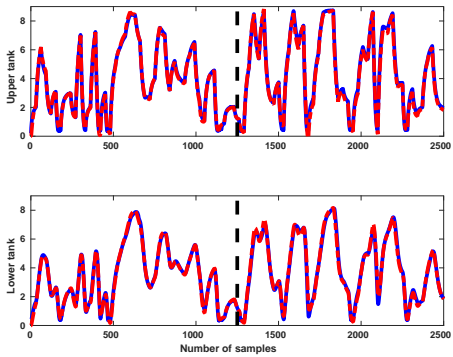}
\caption{The output in Example \ref{ex:WaterTank} (blue), plotted
  together with the output of the model identified by \textsc{Lava-R}
  (red). The system was identified using the first 1250 data
  samples. The validation set consisted of the remaining 1250
  samples.}
\label{fig:WaterTank}
\end{figure}

The identification was performed using $n_a = 2$, $n_b = 2$ and $M=3$. With two outputs, this results in 14 parameters in $\linp$ and 1458 parameters in $\nonq$. 
\textsc{Lava-R} found a model with only 37 nonzero parameters in $\nonq$, and the simulated output together with the measured output are shown in Fig. \ref{fig:WaterTank}. 
The FIT values, computed on the validation data are shown in Table
\ref{table:WaterTank}. It can be seen that an affine ARX model gives a 
good fit, but also that using \textsc{Lava-R} the FIT measure can be improved
significantly. In this
example, \textsc{Wave} did not perform very well.

\begin{table}
	\caption{FIT for Example \ref{ex:WaterTank}.}
	\begin{tabular}{|c|c|c|c|}
		\hline
	                & \textsc{Lava-R} & \textsc{Wave} & \textsc{Arx} \\
		\hline
		Upper tank & $91.6\%$ & $79.2\%$ & $84.9\%$ \\
		Lower tank & $90.8\%$ & $76.9\%$  & $78.6\%$  \\
		\hline
	\end{tabular}
        \label{table:WaterTank}
\end{table}

\subsection{Pick-and-place machine}\label{ex:pick-place}
In the final example, a real pick-and-place machine is studied. This machine is used to place electronic components on a
circuit board, and is described in detail in
\cite{JuloskiEtAl2004_data}. 
This system exhibits saturation, different modes, and other nonlinearities. 
 The data used here are from a real physical process, and were also used in e.g. \cite{Bemporad2005,JuloskiEtAl2006_hybrid,OhlssonLjung2013_identification}.
The data set consists of a 15s recording of the single input $u(t)$ and the vertical position of the mounting head $y(t)$. The data was sampled at 50 Hz, and the first 8s ($N=400$) were used for identification and the last 7s for validation.

The identification was performed using $n_a = 2$, $n_b=2$ and
$M=6$. For the SISO system considered here, this results in 5
parameters in $\linp$ and 1296 parameters in
$\nonq$. \textsc{Lava-R} found a model with 33 of the parameters in
$\nonq$ being nonzero, the output of which is shown in Fig.~\ref{fig:pickandplace}. 
\begin{figure}[h]
\centering
\includegraphics[width = 0.8\linewidth]{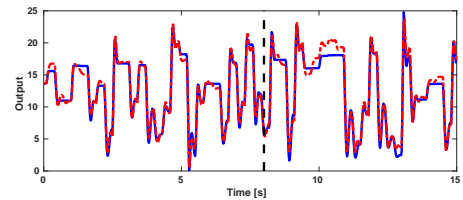}
\caption{The output in Example \ref{ex:pick-place} (blue), plotted
  together with the output of the model identified by \textsc{Lava-R}
  (red).  The system was identified using the first part of the data,
  while the validation set consisted of the remaining samples
  indicated after the dashed line.}
\label{fig:pickandplace}
\end{figure}

\begin{table}
	\caption{FIT for Example \ref{ex:pick-place}.}
	\begin{tabular}{|c|c|c|c|c|c|}
		\hline
	                & \textsc{Lava-R} & \textsc{Wave} & \textsc{Arx} \\
		\hline
		FIT & $83.2\%$ & $78.2\%$ & $73.1\%$ \\
		\hline
	\end{tabular}
        \label{table:PickAndPlace}
\end{table}
The FIT values, computed on the validation data, for \textsc{Lava-R}, \textsc{Wave} and affine \textsc{Arx} are shown in Table
\ref{table:PickAndPlace}. \textsc{Lava-R} outperforms \NARX{} using
wavelet networks, and both are better than \ARX{}.

\section{Conclusion}

We have developed a method for learning nonlinear systems with
multiple outputs and inputs. We began by modelling the errors of a
nominal predictor using a latent variable formulation. The nominal predictor could for instance be a linear approximation of the system but could also include known nonlinearities. A learning criterion was derived based on the principle of maximum likelihood, which obviates the tuning of regularization parameters. The criterion is then minimized using a
majorization-minimization approach. Specifically, we derived a convex
user-parameter free formulation, which led to a computationally
efficient recursive algorithm that can be applied to large
datasets as well as online learning problems.

The method introduced in this paper learns parsimonious predictor
models and captures nonlinear system dynamics. This was illustrated via
synthetic as well as real data examples. As shown in these examples a
recursive implementation of the proposed method was
capable of outperforming a batch method using a \NARX{} model with a
wavelet network.

\appendices
\section{Derivation of the distributions \eqref{eq:lik2} and \eqref{eq:condZ}}\label{app:A}
We start by computing $p(\Out | \para)$ given in \eqref{eq:lik}.  The function $p(\Out | \para, \nonq)$ can be found from \eqref{eq:out}--\eqref{eq:noise_model}  and the chain rule:
\begin{equation}\label{eq:y_given_z}
\begin{split}
	p(\Out | \para, \nonq) &=  \prod_{t=1}^N p_{\error}( \out(t) - \linp \regpk(t) | \data_{t-1}, \nonq),
\end{split}
\end{equation}
where we have neglected initial conditions \cite{SoderstromStoica1988_system}.
 Since
\begin{equation*}
\begin{split}
	&p_{\error}(\out(t) - \linp \regpk(t) | \data_{t-1}, \nonq)  \propto\\
      &  \exp\left( -\frac{1}{2}\| \out(t) - \linp \regpk(t) -
          \nonq \regqk(t) \|_{\covnoise^{-1}}^2 \right),
\end{split}
\end{equation*}
it follows that
\begin{equation}\label{eq:cond_lik}
\begin{split}
	& p(\Out | \para, \nonq) =\\
& \frac{1}{\sqrt{(2\pi)^{n_yN} | \covnoise|^{N} }} \exp\left( -\frac{1}{2}\| \Out - \linp \regp - \nonq
  \regq\|_{\covnoise^{-1}}^2 \right).
\end{split}
\end{equation}
Using the vectorized variable in \eqref{eq:vec_variables}-\eqref{eq:vec_reg} we can see that
\begin{equation*}
\mvec(\linp \regp) = \regpv \linpv \quad \text{and} \quad \mvec(\nonq \regq) = \regqv \nonqv.
\end{equation*}
and thus,
\[
	\| \Out - \linp \regp - \nonq \regq \|_{\covnoise^{-1}}^2 = \| \out - \regpv \linpv - \regqv \nonqv \|_{\eye_N \otimes \covnoise^{-1}}^2.
\]
Next, we note that the following useful equality holds:
\begin{multline}\label{eq:the_equality}
	\| \out - \regpv \linpv - \regqv \nonqv \|_{\eye_N \otimes \covnoise^{-1}}^2 + \| \nonqv \|_{\covq^{-1}}^2 = \\
	\| \out - \regpv \linpv \|_{\covout^{-1}}^2 + \| \nonqv - \condz \|_{\covcondz^{-1}}^2
\end{multline}
where $\covout$ is given by \eqref{eq:covout}, $\condz$ by \eqref{eq:condZmean}, and $\covcondz$ by \eqref{eq:condZcov}.
To see that the equality holds, expand the norms on both sides of \eqref{eq:the_equality} and apply the matrix inversion lemma.

The sought-after distribution $p(\Out | \para)$ is given by
\eqref{eq:lik}. By using \eqref{eq:the_equality} it follows that 
\begin{multline}
	p(\Out | \para, \nonq) p(\nonq) \propto  \\
	 \exp(-\frac{1}{2} \| \out - \regpv \linpv\|_{\covout^{-1}}^2 ) \exp(-\frac{1}{2} \| \nonqv - \condz \|_{\covcondz^{-1}}^2) \label{eq:pY_pZ}.
\end{multline}
with the normalization constant $( (2\pi)^{n_y(N+q)} |\covnoise|^N|\covq|)^{-1/2}$.
Noting that
\begin{equation*}
	\int \exp\left(- \frac{1}{2} \| \nonqv - \condz \|_{\covcondz^{-1}}^2 \right) d\nonq = \sqrt{ (2\pi)^{n_y q} | \covcondz | }
\end{equation*}
it can be seen that
\begin{equation}\label{eq:py_theta}
	p(\Out | \para) = \frac{1}{\sqrt{(2\pi)^{N n_y} |\covout|}} \exp\left( -\frac{1}{2} \| \out - \regpv \linpv \|_{\covout^{-1}}^2 \right),
\end{equation}
which proves \eqref{eq:lik2}. To obtain an expression for $p(\nonq | \para, \Out)$ simply insert \eqref{eq:pY_pZ} and \eqref{eq:py_theta} into Bayes' rule
to get \eqref{eq:condZ}.

\section{Derivation of the majorizing tangent plane \eqref{eq:tangentplane}}\label{app:lnapprox}
The first-order Taylor expansion of the log-determinant can be written as
\begin{equation*}
\begin{split}
	\ln | \covout| &\simeq \ln | \wtilde{\covout}|
	+ (\partial_{\mbs{\sigma}} \ln|\covout|)|_{\covout =
	  \wtilde{\covout}} (\mbs{\sigma}-\wtilde{\mbs{\sigma}}) \\
	&\quad +
	(\partial_{\mbf{d}} \ln |\covout|_{\covout =
	  \wtilde{\covout}})(\mbf{d} - \tilde{\mbf{d}})
\end{split}
\end{equation*}
where $\mbs{\sigma}$ is the vector of diagonal elements in $\covnoise$ and $\mbf{d}$ contains the diagonal elements in $\covq$.

For the derivatives with respect to $\mbf{d}$ we have
\[
	\frac{\partial}{ \partial d_{i,j}} \ln |\covout| = \tr\left( \covout^{-1} \frac{\partial \covout}{\partial d_{i,j}} \right) = \tr\left( \regqv^\transp \covout^{-1} \regqv \frac{\partial \covq}{\partial{d_{i,j}}}  \right).
\]
 Note that
 \[
	 \sum_{i=1}^{n_y} \sum_{j=1}^q d_{i,j}\frac{\partial \covq}{\partial d_{i,j}} = \covq
 \]
 so 
Hence
\[
	\partial_{\mbf{d}} \ln |\covout|_{\covout = \wtilde{\covout}}  (\mbf{d} -\tilde{\mbf{d}}) =  \tr\left( \regqv^\transp \wtilde{\covout}^{-1} \regqv (\covq - \wtilde{\covq})  \right).
\]
In the same way 
\[
	\partial_{\mbs{\sigma}} \ln |\covout|_{\covout = \wtilde{\covout}} (\mbs{\sigma}-\tilde{\mbs{\sigma}}) = \tr\left( \wtilde{\covout}^{-1} (\eye_N \otimes (\covnoise -\wtilde{\covnoise}) \right).
\]
Since $\ln| \covout|$ is concave in $\mbs{\sigma}$ and $\mbf{d}$, it follows that 
\begin{equation}\label{eq:logdet_ineq}
	\ln | \covout| \leq \wtilde{K} + \tr\left( \regqv^\transp \wtilde{\covout}^{-1} \regqv \covq  \right) + \tr\left( \wtilde{\covout}^{-1} (\eye_N \otimes \covnoise) \right)
\end{equation}
where
\[
	\wtilde{K} = \ln |\wtilde{\covout}| - \tr\left( \regqv^\transp \wtilde{\covout}^{-1} \regqv \wtilde{\covq} \right) - \tr\left( \wtilde{\covout}^{-1} (\eye_N \otimes \wtilde{\covnoise}) \right).
\]

\section{Proof of Theorem \ref{thm:l1_formulation}} \label{app:l1_formulation}
It follows from \eqref{eq:covq} that
\[
	\covout = \sum_{i=1}^{n_y} ( \covout_i \otimes \matbase_{i,i})
\]
where $\covout_i = \regq^\transp \covq_i \regq + \sigma_i \eye_N$. Hence,
\[
	\covout^{-1} = \sum_{i=1}^{n_y} \left( \covout_i^{-1} \otimes \matbase_{i,i} \right).
\]
Thus, we can rewrite \eqref{eq:VprimZ} as (to within an additive constant):
\begin{multline}\label{eq:Vprim_sum}
	\cost'(\para | \nonq, \wtilde{\para}) = \sum_{i=1}^{n_y}\Bigl( \frac{1}{\sigma_i} \| \bar{\out}_i \|_2^2 + \| \nonqv_i \|_{\covq_i^{-1}}^2 + \\ \sigma_i \tr(\wtilde{\covout}_i^{-1}) + \tr( \regq \wtilde{\covout}_i^{-1} \regq^\transp \covq_i ) \Bigr).
\end{multline}
where $\bar{\out}_i = \out_i - \regp^\transp \linpv_i - \regq^\transp \nonqv_i$.

We next derive analytical expressions for the $\covnoise$ and $\covq$ that minimize $\cost'(\para | \nonq, \wtilde{\para})$.  Note that
\[
	\frac{\partial}{\partial \sigma_i} \cost'(\para | \nonq, \wtilde{\para}) = -\frac{1}{\sigma_i^2} \| \bar{\out}_i \|_2^2 + \tr(\wtilde{\covout}_i^{-1}),  
\]
and setting the derivative to zero yields the estimate \eqref{eq:dhat}.
In the same way it can be seen that the minimum of $d_{i,j}$ is attained at \eqref{eq:dhat}.

Inserting $\hat{\sigma}_i$ and $\hat{d}_{i,j}$ into \eqref{eq:Vprim_sum}, we see that we can find the minimizing $\linp$ and $\nonq$ by minimizing
\begin{align*}
	2	&\sum_{i=1}^{n_y} \Bigl(\sqrt{\tr(\wtilde{\covout}_i^{-1})} \| \out_i - \regp^\transp \linpv_i - \regq^\transp \nonqv_i \|_2 + \\
	&\sum_{j=1}^q |z_{i,j}| \sqrt{ \wbar{\regqk}_j^\transp \wtilde{\covout}_i^{-1} \wbar{\regqk}_j } \Bigr).
\end{align*}
Since term $i$ in the above sum is invariant with respect to $\linpv_k$ and $\nonqv_k$ for $k\neq i$, we can divide term $i$ by $2\sqrt{\tr(\wtilde{\covout}_i^{-1})}$, and see that minimizing the criterion above is equivalent to \eqref{eq:concentrated}. 
\section{Proof of Theorem \ref{cor:omega_0}}\label{app:omega_0}

Initializing \eqref{eq:MMsequence} at
$\wbar{\para}_0 = \{ \0, \0, \eye_{n_y}\}$ and $\para_0 = \{ \linp_0,
\0, \covnoise_0\}$ where $\covnoise_0 = \text{diag}(
\sigma^{(0)}_1, \ldots, \sigma^{(0)}_{n_y} )$, produces two
sequences denoted $\wbar{\para}_k = \{ \wbar{\linp}_k, \wbar{\covq}_k,
\wbar{\covnoise}_k\}$ and $\para_k = \{ \linp_k,
\covq_k,\covnoise_k\}$ for $k>0$, respectively. This results also in
sequences $\wbar{\nonq}_k$ and $\nonq_k$.
The theorem states that:
\begin{equation}\label{eq:convergepara}
	\linp_k = \wbar{\linp}_k  \quad \text{and} \quad  \nonq_k = \wbar{\nonq}_k
\end{equation}
\begin{equation}\label{eq:convergecov}
	\covq_k - \wbar{\covq}_k \rightarrow \0  \quad \text{and} \quad \covnoise_k -
        \wbar{\covnoise}_k \rightarrow \0.
\end{equation}
We now show the stronger result that the covariance matrices converge as
\begin{equation}\label{eq:induction}
	\covq_{i}^{(k)} = c_i^{(k)}
        \wbar{\covq}_{i}^{(k)}, \quad \sigma_i^{(k)} = c_i^{(k)}
        \wbar{\sigma}_i^{(k)}, \quad \forall k >0,
\end{equation}
where $c_i^{(k)} = (\sigma^{(0)}_i)^{\frac{1}{2^k}}$. Note that $c_i^{(k)}
\rightarrow 1$ as $k \rightarrow \infty$. Hence \eqref{eq:induction} implies
\eqref{eq:convergecov}. We prove
\eqref{eq:induction} and \eqref{eq:convergepara} by induction.
That \eqref{eq:induction} and \eqref{eq:convergepara} holds for $k = 1$ follows directly from Theorem \ref{thm:l1_formulation}. Now assume that \eqref{eq:induction} holds for some $k\geq 1$.  Let 
\begin{align*}
	\wbar{\covout}_i &= \regq \wbar{\covq}_{i}^{(k)} \regq^\transp + \wbar{\sigma}_i^{(k)} \eye_N,\\
	\wtilde{\covout}_i &= \regq \covq_i^{(k)} \regq^\transp + \sigma_i^{(k)} \eye_N = c_i^{(k)} \wbar{\covout}_i,
\end{align*}
where the last equality follows by the assumption in
\eqref{eq:induction}. Therefore the weights used to estimate
$\linp_{k+1}$ and $\nonq_{k+1}$ are the same as those used to estimate $\wbar{\linp}_{k+1},\wbar{\nonq}_{k+1}$:
\[
	w_{i,j} = \sqrt{\frac{{\regqk}_j^\transp \wtilde{\covout}_i^{-1} {\regqk}_j}{\tr(\wtilde{\covout}_i^{-1})}} =\sqrt{\frac{{\regqk}_j^\transp \wbar{\covout}_i^{-1} {\regqk}_j}{\tr(\wbar{\covout}_i^{-1})}} ,
\]
so we can conclude that
$\linp_{k+1} = \wbar{\linp}_{k+1}$ and $\nonq_{k+1} = \wbar{\nonq}_{k+1}$. The estimate $\covq_{k+1}$ is given by 
\[
	d_{i,j}^{(k+1)} = \frac{|z_{i,j}|}{\sqrt{{\regqk}_j^\transp \wtilde{\covout}_i^{-1} {\regqk}_j}} = \frac{\sqrt{c_i^{(k)}} |z_{i,j}|}{\sqrt{{\regqk}^\transp \wbar{\covout}^{-1} {\regqk}}} = c_i^{(k+1)} \wbar{d}_{i,j}^{(k+1)}
\]
so $\covq_{i}^{(k+1)} = c_i^{(k+1)} \wbar{\covq}_i^{(k+1)}$, and in
the same way it can be seen that $\sigma_i^{(k+1)} = c_i^{(k+1)}
\wbar{\sigma}_i^{(k+1)}$. Hence by induction \eqref{eq:induction} and \eqref{eq:convergepara}  are
true for all $k>0$ and Theorem~\ref{cor:omega_0} follows.

\section{Derivation of the proposed recursive algorithm}\label{app:algo}
In order to minimize $\cost'(\nonq)$ we use a cyclic algorithm. That is, we minimize with respect to one component at a time. We follow an approach similar to that in \cite{Zachariah&Stoica2015_onlinespice}, with the main difference being that here we consider arbitrary nonnegative weights $\weights_i$.

Note that minimization of $\cost'(\nonq)$ with respect to $z_{i,j}$ is equivalent to minimizing 
\[
	\cost'(z_{i,j}) = \| \tilde{\recerror}_{i,j} - \recreg_j z_{i,j} \|_2 + w_{i,j} |z_{i,j}|
\]
where
\[
	\tilde{\recerror}_{i,j} = \recerror_i - \sum_{k\neq j} \recreg_kz_{i,k}, \quad \recreg_j = [\regq^{\top} - \regp^\top \regpregq]_j.
\]
As in \cite{Zachariah&Stoica2015_onlinespice} it can be shown that the sign of the optimal $\hat{z}_{i,j}$ is given by
	$\sign(\hat{z}_{i,j}) = \sign( \recreg_j^{\transp} \tilde{\recerror}_{i,j}).$
Hence we only have to find the absolute value $r_{i,j} =
|z_{i,j}|$. Let
\[
	\alpha_{i,j} = \| \tilde{\recerror}_{i,j} \|_2^2 ,\quad \beta_j = \| \recreg_j \|_2^2 , \quad	g_{i,j} = \recreg_j^\transp \tilde{\recerror}_{i,j}.
\]
It is then straightforward to verify that the minimization of $\cost'(z_{i,j})$ is equivalent to minimizing 
\[
	\cost'(r_{i,j}) = (\alpha_{i,j} + \beta_j r_{i,j}^2 - 2 g_{i,j} r_{i,j})^{1/2} + w_{i,j} r_{i,j},
\]
over all $r_{i,j} \geq 0$, and then setting $\hat{z}_{i,j} = \sign(g_{i,j}) \hat{r}_{i,j}$. From the Cauchy-Schwartz inequality it follows that
\[
	\alpha_{i,j} \beta_{i,j} \geq g_{i,j}^2.
\]
Using this inequality it was shown in \cite{Zachariah&Stoica2015_onlinespice} that $\cost'(r_{i,j})$ is a convex function. The derivative of $\cost'(r_{i,j})$ is given by (dropping the subindices for now), 
\begin{equation}\label{eq:dJdr}
	\frac{d\cost'}{dr} = \frac{\beta r - |g|}{(\beta r^2 - 2|g|r + \alpha)^{1/2}} + w.
\end{equation}
Since $\cost'(r)$ is convex it follows that it is minimized by $r=0$ if and only if $d\cost'(0)/dr \geq 0$, i.e., if and only if 
\begin{equation}\label{eq:zero_ineq}
	\alpha w^2 \geq g^2.
\end{equation}
Next we study the case when $g^2 > \alpha w^2$.
It then follows from \eqref{eq:dJdr} that the stationary points of $\cost'(r)$ satisfy 
\begin{equation}\label{eq:stationary1}
	(\beta r - |g| ) = -w (\beta r^2 - 2 |g| r + \alpha)^{1/2}
\end{equation}
Solving this equation for $r$ we get the stationary point 
\[
	\hat{r} = \frac{|g|}{\beta} - \frac{w}{\beta\sqrt{\beta - w^2}} \sqrt{\alpha \beta - g^2}.
\]
Hence we can conclude that the minimizer of $\cost'(z_{i,j})$ is given by
\[
	\hat{z}_{i,j} = \left\{\begin{array}{ll} \sign(g_{i,j}) \hat{r}_{i,j} & \text{if } \alpha_{i,j} w_{i,j}^2 < g_{i,j}^2 \\ 0 & \text{otherwise} \end{array} \right. .
\]
Next we show how to obtain this estimate using only recursively computed quantities. 
Let
\begin{align}
	\gammadiff &= ( \regq^{\top} - \regp^{\top}\regpregq)^{\top} ( \regq^{\top} - \regp^{\top} \regpregq) \label{eq:gammadiff} \\
	\kappa_i &= \| \recerror_i \|_2^2 \\
	\rho_i &= (\regq^\top - \regp^\top \regpregq)^\top \recerror_i \\
	\eta_i &= \| \recerror_i - (\regq^{\top} - \regp^{\top} \regpregq) \nonqv_i \|_2^2 \\
	\zeta_i &= (\regq^{\top} - \regp^{\top} \regpregq) (\recerror_i - (\regq^{\top} - \regp^{\top} \regpregq) \nonqv_i )  \label{eq:zetai}
\end{align}
Then it is straightforward to show that 
\begin{align*}
	\alpha_{i,j} &= \eta_i + \beta_j z_{i,j}^2 + 2 \zeta_{i,j} z_{i,j} \\
	\beta_j &= \gammadiff_{j,j} \\
	g_{i,j} &= \zeta_{i,j} + \beta_j z_{i,j}.
\end{align*}
Also define  $\regsquare^{\mathbf{a}, \mathbf{b}}(t)$ recursively, for any two vector-valued signals $\mathbf{a}(t)$, $\mathbf{b}(t)$, as
\begin{align}
	\regsquare^{\mathbf{a}, \mathbf{b}}(0) &= \0 \\ 
	\regsquare^{\mathbf{a}, \mathbf{b}}(t+1) &= \regsquare^{\mathbf{a}, \mathbf{b}}(t) + \mathbf{a}(t) \mathbf{b}^\transp(t) \label{eq:regsquare_update}.
\end{align}
Note that $\regsquare^{\mathbf{a}, \mathbf{b}}(t) = (\regsquare^{\mathbf{b}, \mathbf{a}}(t))^{\transp}$. It can be verified that all quantities \eqref{eq:gammadiff}--\eqref{eq:zetai}, and thus $\hat{z}_{i,j}$, can be coputed from $\regsquare^{\cdot, \cdot}(N)$. 

The full algorithm for updating the needed quantities, including the update of $\wbar{\linp}$ and $\regpregq$, is summarized in Algorithm \ref{alg:rec}. Note that the iterations of the outer for-loop can be executed in parallel. 

\begin{algorithm}[h!]
	\caption{: \textbf{Recursive solution to \eqref{eq:concentrated}}} \label{alg:rec}
\begin{algorithmic}[1]
	\State Input: $\out(t)$, $\regpk(t)$, $\regqk(t)$, $\check{\linp}$ and $\check{\nonq}$
	\State Update $\gain(t)$, $\wbar{\linp}(t)$ and $\regpregq(t)$ according to \eqref{eq:RLS_linp}-\eqref{eq:RLS_gain}.
	\State Update $\regsquare^{\regpk, \regpk}(t)$, $\regsquare^{\regqk, \regqk}(t)$, $\regsquare^{\out, \out}(t)$,  $\regsquare^{\regpk, \regqk}(t)$, $\regsquare^{\regpk, \out}(t)$ and $\regsquare^{\regqk, \out}(t)$ according to \eqref{eq:regsquare_update} .
	\State $\gammadiff = \regsquare^{\regqk, \regqk} - \regsquare^{\regqk, \regpk} \regpregq - \regpregq^\transp \regsquare^{\regpk, \regqk} + \regpregq^{\transp} \regsquare^{\regpk, \regpk} \regpregq$. 
	\For{$i=1, \dots, n_y$}
		\State	$\kappa = \regsquare^{\out, \out}_{i,i} + \tilde{\linpv}_i^{\transp} \regsquare^{\regpk, \regpk} \tilde{\linpv}_i - 2 \tilde{\linpv}_i^{\transp} [\regsquare^{\regpk, \out}]_i  $.
		\State	$\rho = [\regsquare^{\regqk, \out}]_i - \regsquare^{\regqk, \regpk} \tilde{\linpv}_i - \regpregq^\transp [\regsquare^{\regpk, \regqk}]_i + \regpregq^\transp \regsquare^{\regpk, \regpk} \tilde{\linpv}_i $.
		\State  $\eta = \kappa - 2 \rho^{\transp} \check{\nonqv}_i + \check{\nonqv}_i^{\transp} \gammadiff \check{\nonqv}_i $.
		\State  $\zeta = \rho - \gammadiff \check{\nonqv}_i$
		\Repeat
		\For{ $j= 1, \dots, q$ }
			\State $\alpha = \eta + \gammadiff_{j,j} \check{z}_{i,j}^2 + 2 \zeta_j \check{z}_{i,j} $.
			\State $g = \zeta_j + \gammadiff_{j,j} \check{z}_{i,j}$.
			\State $\beta = \gammadiff_{j,j}$.
			\State $\hat{r} =\frac{|g|}{\beta} - \frac{w_{i,j}}{\beta\sqrt{\beta - w_{i,j}^2}}\sqrt{\alpha \beta - g^2} $. 
			\State $ \hat{z}_{i,j} = \left\{ \begin{array}{ll} \sign(g) \hat{r} & \text{if } \alpha w_{i,j}^2 < g^2 \\ 0 & \text{otherwise}\end{array} \right.$
			\State $\eta := \eta + \gammadiff_{j,j}(\check{z}_{i,j} - \hat{z}_{i,j}) + 2 (\check{z}_{i,j} - \hat{z}_{i,j}) \zeta_j$.
			\State $\zeta := \zeta + [\gammadiff]_j (\check{z}_{i,j} - \hat{z}_{i,j})$.
			\State $\check{z}_{i,j} := \hat{z}_{i,j}$.
		\EndFor
		\Until {number of iterations equals $L$.}
	 \EndFor
	 \State $\what{\nonq} = \check{\nonq}$.
	 \State $\what{\linp} = \wbar{\linp} - \what{\nonq} \regpregq$.
	 \State Output: $\what{\linp}, \what{\nonq}$.
\end{algorithmic}
\end{algorithm}

\bibliographystyle{plain}
\bibliography{refs_nonlinear}

\end{document}